\newcommand{\field}[1]{\mathbb{#1}}
\newcommand{\set}[1]{\mathcal{#1}}
\newcommand{\reals}{\field{R}}
\newcommand{\comment}[1]{}
\theoremstyle{plain}
\newtheorem{theorem}{Theorem}
\theoremstyle{definition}
\newcommand{\drawnfrom}{\sim}       
\newcommand{\expect}{\mathbf{E}}    
\title{A Nonparametric Conjugate Prior Distribution for the Maximizing Argument of a Noisy Function}
\author{%
Pedro A. Ortega\\
Max Planck Institute for Intelligent Systems\\
Max Planck Institute for Biolog. Cybernetics\\
\texttt{pedro.ortega@tuebingen.mpg.de}
\AND Jordi Grau-Moya \\
Max Planck Institute for Intelligent Systems\\
Max Planck Institute for Biolog. Cybernetics\\
\texttt{jordi.grau@tuebingen.mpg.de}
\And Tim Genewein \\
Max Planck Institute for Intelligent Systems\\
Max Planck Institute for Biolog. Cybernetics\\
\texttt{tim.genewein@tuebingen.mpg.de}
\And David~Balduzzi \\
Max Planck Institute for Intelligent Systems\\
\texttt{david.balduzzi@tuebingen.mpg.de}
\And Daniel A. Braun \\
Max Planck Institute for Intelligent Systems\\
Max Planck Institute for Biolog. Cybernetics\\
\texttt{daniel.braun@tuebingen.mpg.de}}
\begin{document}
\nipsfinalcopy
\maketitle

\begin{abstract}%
We propose a novel Bayesian approach to solve stochastic optimization problems that involve fnding extrema of noisy, nonlinear functions. Previous work has focused on representing possible functions explicitly, which leads to a two-step procedure of first, doing inference over the function space and second, finding the extrema of these functions. Here we skip the representation step and directly model the distribution over extrema. To this end, we devise a non-parametric conjugate prior based on a kernel regressor. The resulting posterior distribution directly captures the uncertainty over the maximum of the unknown function. We illustrate the effectiveness of our model by optimizing a noisy, high-dimensional, non-convex objective function.
\end{abstract}%


\section{Introduction}

Historically, the fields of statistical inference and stochastic optimization have often developed their own specific methods and approaches. Recently, however, there has been a growing interest in applying inference-based methods to optimization problems and vice versa \cite{Brochu2009, Rawlik2010, Shapiro2000, Kappen2012}. Here we consider stochastic optimization problems where we observe noise-contaminated values from an unknown nonlinear function and we want to find the input that maximizes the expected value of this function.

The problem statement is as follows. Let $\set{X}$ be a metric space. Consider a stochastic function $f: \set{X} \rightsquigarrow \reals$ mapping a test point $x \in \set{X}$ to real values $y \in \reals$ characterized by the conditional pdf $P(y|x)$. Consider the mean function
\begin{equation}
\label{eq:objectivefunction}
    \bar{f}(x) := \expect[y|x] = \int y P(y|x) \,dy.
\end{equation}
The goal consists in modeling the optimal test point
\begin{equation}
\label{eq:argmax}
    x^\ast := \arg \max_x \{ \bar{f}(x) \}.
\end{equation}

Classic approaches to solve this problem are often based on stochastic approximation methods \cite{Kushner1997}. Within the context of statistical inference, Bayesian optimization methods have been developed where a prior distribution over the space of functions is assumed and uncertainty is tracked during the entire optimization process \cite{Mockus1994, Lizotte2008}. In particular, non-parametric Bayesian approaches such as Gaussian Processes have been applied for derivative-free optimization \cite{Jones1998, Osborne2009}, also within the context of the continuum-armed bandit problem \cite{Srinivas2010}. Typically, these Bayesian approaches aim to explicitly represent the unknown objective function of \eqref{eq:objectivefunction} by entertaining a posterior distribution over the space of objective functions. In contrast, we aim to model directly the distribution of the maximum of~\eqref{eq:argmax} conditioned on observations.

The paper is structured as follows. Section~\ref{sec:description} gives a brief description of the model suitable for direct implementation. The model is then derived in Section~\ref{sec:derivation}. Section~\ref{sec:results} presents experimental results. Section~\ref{sec:results} concludes.

\section{Description of the Model}\label{sec:description}

\begin{figure}[tbp]
\begin{center}
    \includegraphics[width=14cm]{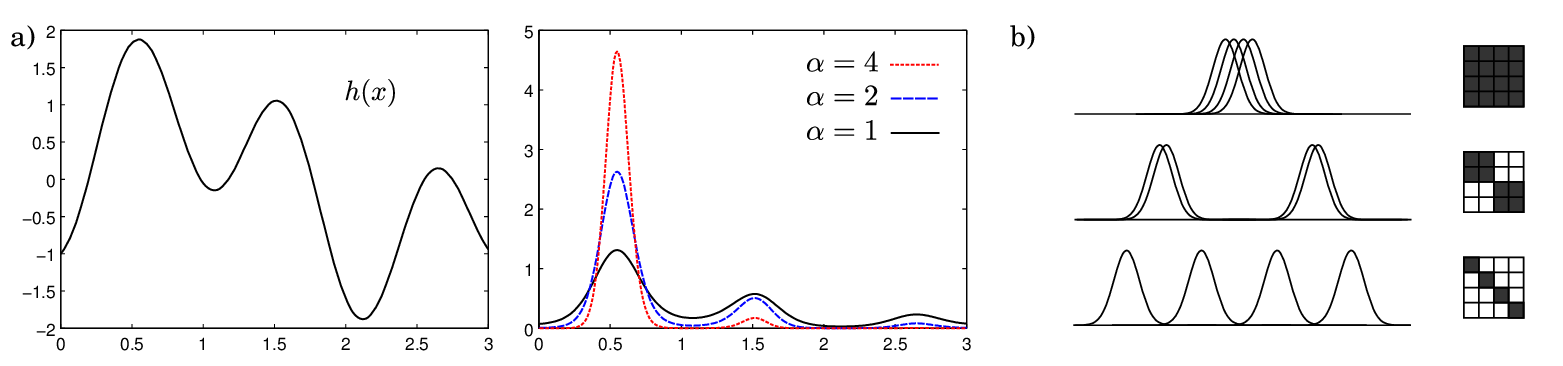}
    \caption{a) Given an estimate $h$ of the mean function $\bar{f}$ (left), a simple probability density function over the location of the maximum $x^\ast$ is obtained using the transformation $P(x^\ast) \propto \exp\{ \alpha h(x^\ast) \}$, where $\alpha > 0$ plays the role of the precision (right). b) Illustration of the Gramian matrix for different test locations. Locations thar are close to each other produce large off-diagonal entries.}\label{fig:intuition}
\end{center}
\end{figure}

Our model is intuitively straightforward and easy to implement\footnote{Implementations can be downloaded from http://www.adaptiveagents.org/argmaxprior}. Let $h(x): \set{X} \rightarrow \reals$ be an estimate of the mean $\bar{f}(x)$ constructed from data $\set{D}_t := \{(x_i, y_i)\}_{i=1}^t$ (Figure~\ref{fig:intuition}a, left). This estimate can easily be converted into a posterior pdf over the location of the maximum by first multiplying it with a precision parameter $\alpha>0$ and then taking the normalized exponential (Figure~\ref{fig:intuition}a, right)
\[
    P(x^\ast|\set{D}_t)
    \propto \exp\{ \alpha \cdot h(x^\ast) \}.
\]
In this transformation, the precision parameter $\alpha$ controls the certainty we have over our estimate of the maximizing argument: $\alpha \approx 0$ expresses almost no certainty, while $\alpha \rightarrow \infty$ expresses certainty. The rationale for the precision is: the more \emph{distinct} inputs we test, the higher the precision---testing the same (or similar) inputs only provides \emph{local} information and therefore should not increase our knowledge about the \emph{global} maximum. A simple and effective way of implementing this idea is given by
\begin{equation}\label{eq:argmaxprior}
    P(x^\ast|\set{D}_t)
    \propto
    \exp\biggl\{
    \rho
    \cdot \underbrace{ 
	\biggl( \xi +
        t \cdot \frac{ \sum_i K(x_i, x_i) }{ \sum_i \sum_j K(x_i, x_j) }
        \biggr)
    }_\text{effective \# of locations}
    \cdot \underbrace{
        \frac{ \sum_i K(x_i, x^\ast) y_i + K_0(x^\ast) y_0(x^\ast)}
             { \sum_i K(x_i, x^\ast) + K_0(x^\ast) }
    }_\text{estimate of $\bar{f}(x^\ast)$}
    \biggr\},
\end{equation}
where $\rho$, $\xi$, $K$, $K_0$ and $y_0$ are parameters of the estimator: $\rho > 0$ is the precision we gain for each new distinct observation; $\xi > 0$ is the number of prior points; $K: \set{X} \times \set{X} \rightarrow \reals^{+}$ is a finite, symmetric kernel function; $K_0: \set{X} \rightarrow \reals^{+}$ is a prior precision function; and $y_0: \set{X} \rightarrow \reals$ is a prior estimate of $\bar{f}$.

In~\eqref{eq:argmaxprior}, the mean function $\bar{f}$ is estimated with a kernel regressor \cite{Hastie2009}, and the total effective number of locations is calculated as the sum of the prior locations $\xi$ and the number of distinct locations in the data $\set{D}_t$. The latter is estimated by multiplying the number of data points $t$ with the coefficient
\[
  \frac{ \sum_i K(x_i, x_i) }{ \sum_i \sum_j K(x_i, x_j) } \in (0,1],
\]
i.e.\ the ratio between the trace of the Gramian matrix $(K(x_i,x_j))_{i,j}$ and the sum of its entries. Inputs that are very close to each other will have overlapping kernels, resulting in large off-diagonal entries of the Gramian matrix---hence decreasing the number of distinct locations (Figure~\ref{fig:intuition}b).

The expression for the posterior can be calculated, up to a constant factor, in quadratic time in the number of observations. It can therefore be easily combined with Markov chain Monte Carlo methods (MCMC) to implement stochastic optimizers as illustrated in Section~\ref{sec:results}.

\section{Derivation of the Model}\label{sec:derivation}

\subsection{Function-Based, Indirect Model}

Our first task is to derive an \emph{indirect} Bayesian model for the optimal test point that builds its estimate via the underlying function space. Let $\set{G}$ be the set of hypotheses, and assume that each hypothesis $g \in \set{G}$ corresponds to a stochastic mapping $g: \set{X} \rightsquigarrow \reals$. Let $P(g)$ be the prior\footnote{For the sake of simplicity, we neglect issues of measurability of $\set{G}$.} over $\set{G}$ and let the likelihood be $P(\{y_t\}|g,\{x_t\}) = \prod_t P(y_t|g, x_t)$. Then, the posterior of $g$ is given by
\begin{equation}\label{eq:post-function}
    P(g|\{y_t\},\{x_t\})
    = \frac{P(g) P(\{y_t\}|g,\{x_t\})}{P(\{y_t\}|\{x_t\})}
    = \frac{P(g) \prod_t P(y_t|g, x_t)}{P(\{y_t\}|\{x_t\})}.
\end{equation}
For each $x^\ast \in \set{X}$, let $\set{G}(x^\ast) \subset \set{G}$ be the subset of functions such that for all $g \in \set{G}(x^\ast)$, $x^\ast = \arg \max_x \{\bar{g}(x)\}$\footnote{Note that we assume that the mean function $\bar{g}$ is bounded and that it has a unique maximizing test point.}. Then, the posterior over the optimal test point $x^\ast$ is given by
\begin{equation}\label{eq:post-marginal}
    P(x^\ast|\{y_t\},\{x_t\})
    = \int_{\set{G}(x^\ast)} P(g|\{y_t\},\{x_t\}) \,dg,
\end{equation}
This model has two important drawbacks: (a) it relies on modeling the entire function space~$\set{G}$, which is potentially much more complex than necessary; (b) it requires calculating the integral~\eqref{eq:post-marginal}, which is intractable for virtually all real-world problems.

\subsection{Domain-Based, Direct Model}

We want to arrive at a Bayesian model that bypasses the integration step suggested by~\eqref{eq:post-marginal} and directly models the location of optimal test point~$x^\ast$. The following theorem explains how this \emph{direct model} relates to the previous model.

\begin{theorem}\label{theo:bayesian-model}
The Bayesian model for the optimal test point $x^\ast$ is given by
\begin{align*}
    P(x^\ast) &=
        \int_{\set{G}(x^\ast)} P(g) \,dg
    &&\text{(prior)} \\
    P(y_t|x^\ast, x_t, \set{D}_{t-1})
    &= \frac{ \int_{\set{G}(x^\ast)}
        P(y_t|g, x_t) P(g) \prod_{k=1}^{t-1} P(y_k|g, x_k) \,dg }
        { \int_{\set{G}(x^\ast)} P(g) \prod_{k=1}^{t-1} P(y_k|g, x_k) \,dg },
    &&\text{(likelihood)}
\end{align*}
where $\set{D}_t := \{(x_k, y_k)\}_{k=1}^t$ is the set of past tests.
\end{theorem}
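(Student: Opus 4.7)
The plan is to identify the joint distribution of $(x^\ast, y_1, \ldots, y_t)$ given the test inputs $(x_1, \ldots, x_t)$ in terms of the underlying function-space model, and then extract the claimed prior and incremental likelihood by marginalization and the chain rule. Throughout, I treat the test inputs $x_t$ as exogenous covariates (chosen independently of $g$ given past data), so the only randomness to be integrated away is the hypothesis $g$.

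First I would record the partition step. By the footnoted assumption that each $g \in \set{G}$ has a unique maximizer of $\bar g$, the family $\{\set{G}(x^\ast)\}_{x^\ast \in \set X}$ forms a (measurable) partition of $\set{G}$. Hence for any integrable $F(g)$ we have $\int_{\set G} F(g)\,dg = \int_{\set X} \int_{\set{G}(x^\ast)} F(g)\,dg\,dx^\ast$. Applied to $F(g) = P(g)$ this immediately gives the prior formula $P(x^\ast) = \int_{\set G(x^\ast)} P(g)\,dg$ as the marginal density of $x^\ast$ induced by $P(g)$.

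Next I would compute the joint density of $x^\ast$ and the observations. Using the conditional independence $P(y_1,\ldots,y_t\mid g, x_1,\ldots,x_t) = \prod_k P(y_k\mid g, x_k)$ from \eqref{eq:post-function}, and marginalizing $g$ over the partition cell $\set G(x^\ast)$, I obtain
\[
    P(x^\ast, y_1,\ldots,y_t \mid x_1,\ldots,x_t)
    = \int_{\set G(x^\ast)} P(g) \prod_{k=1}^{t} P(y_k\mid g, x_k)\,dg.
\]
The analogous identity at time $t-1$ is the same expression with the product running up to $t-1$; since $x_t$ is exogenous, conditioning additionally on $x_t$ leaves this unchanged.

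Finally I would apply the chain rule to isolate the incremental likelihood:
\[
  P(y_t\mid x^\ast, x_t, \set D_{t-1})
  = \frac{P(x^\ast, y_1,\ldots,y_t\mid x_1,\ldots,x_t)}
         {P(x^\ast, y_1,\ldots,y_{t-1}\mid x_1,\ldots,x_{t-1})}.
\]
Substituting the two joint expressions from the previous step and pulling the factor $P(y_t\mid g, x_t)$ inside the numerator's integrand yields exactly the claimed ratio. The only real obstacle is bookkeeping: one must justify that conditioning on $x_t$ does not alter the denominator (exogeneity of the design) and that the partition $\{\set G(x^\ast)\}$ is measurable enough for Fubini to apply, both of which are granted by the paper's stated assumptions.
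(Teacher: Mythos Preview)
Your proposal is correct and follows essentially the same route as the paper: identify the joint $P(x^\ast,\{y_k\}\mid\{x_k\}) = \int_{\set{G}(x^\ast)} P(g)\prod_k P(y_k\mid g,x_k)\,dg$, read off the prior at $t=0$, and obtain the likelihood as the ratio of the joints at times $t$ and $t-1$ (with the same remark that conditioning the denominator on $x_t$ is harmless). The only cosmetic difference is that you reach the joint directly via the partition $\{\set{G}(x^\ast)\}_{x^\ast}$, whereas the paper arrives at it by equating the Bayes-rule decomposition of the direct posterior with the marginal~\eqref{eq:post-marginal}.
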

\begin{proof}
Using Bayes' rule, the posterior distribution $P(x^\ast|\{y_t\},\{x_t\})$ can be rewritten as
\begin{equation}\label{eq:post-opt}
    \frac{P(x^\ast) \prod_t P(y_t|x^\ast,x_t,\set{D}_{t-1})}
         {P(\{y_t\}|\{x_t\})}.
\end{equation}
Since this posterior is equal to~\eqref{eq:post-marginal}, one concludes (using~\eqref{eq:post-function}) that
\[
    P(x^\ast) \prod_t P(y_t|x^\ast,x_t,\set{D}_{t-1})
    = \int_{\set{G}(x^\ast)} P(g) \prod_t P(y_t|g, x_t) \,dg.
\]
Note that this expression corresponds to the joint $P(x^\ast, \{y_t\}|\{x_t\})$. The prior $P(x^\ast)$ is obtained by setting $t=0$. The likelihood is obtained as the fraction
\[
    P(y_t|x^\ast, x_t, \set{D}_{t-1})
    = \frac{ P(x^\ast, \{y_k\}_{k=1}^{t\quad}| \{x_k\}_{k=1}^{t\quad}) }
           { P(x^\ast, \{y_k\}_{k=1}^{t-1}| \{x_k\}_{k=1}^{t-1}) },
\]
where it shall be noted that the denominator $P(x^\ast, \{y_k\}_{k=1}^{t-1}| \{x_k\}_{k=1}^{t-1})$ doesn't change if we add the condition~$x_t$.
\end{proof}

From Theorem~\ref{theo:bayesian-model} it is seen that although the likelihood model $P(y_t|g, x_t)$ for the indirect model is i.i.d.\ at each test point, the likelihood model $P(y_t|x^\ast, x_t, \set{D}_{t-1})$ for the direct model depends on the past tests $\set{D}_{t-1}$, that is, \emph{it is adaptive}. More critically though, the likelihood function's internal structure of the direct model corresponds to an integration over function space as well---thus inheriting all the difficulties of the indirect model.

\subsection{Abstract Properties of the Likelihood Function}

There is a way to bypass modeling the function space explicitly if we make a few additional assumptions. We assume that for any $g \in \set{G}(x^\ast)$, the mean function $\bar{g}$ is continuous and has a unique maximum. Then, the crucial insight consists in realizing that the value of the mean function $\bar{g}$ inside a sufficiently small neighborhood of $x^\ast$ is larger than the value outside of it (see Figure~\ref{fig:assumptions}a).

We assume that, for any $\delta > 0$ and any $z \in \set{X}$, let $B_\delta(z)$ denote the open $\delta$-ball centered on $z$. The functions in $\set{G}$ fulfill the following properties:
\begin{enumerate}
    \item[a.] \emph{Continuous:} Every function $g \in \set{G}$ is such that its mean $\bar{g}$ is continuous and bounded.
    \item[b.]\emph{Maximum:} For any $x^\ast \in \set{X}$, the functions $g \in \set{G}(x^\ast)$ are such that for all $\delta > 0$ and all $z \notin B_\delta(x^\ast)$, $\bar{g}(x^\ast) > \bar{g}(z)$.
\end{enumerate}

\begin{figure}[tbp]
\begin{center}
    \footnotesize
    \psfrag{la}[c]{a)}
    \psfrag{lb}[c]{b)}
    \psfrag{lc}[c]{c)}
    \psfrag{x1}[c]{$x^\ast$}
    \psfrag{x2}[c]{$x^\ast_1$}
    \psfrag{x3}[c]{$x^\ast_2$}
    \psfrag{y1}[c]{$\phantom{-y}$}
    \psfrag{y2}[c]{$\phantom{-}0$}
    \psfrag{y3}[c]{$\phantom{-y}$}
    \includegraphics{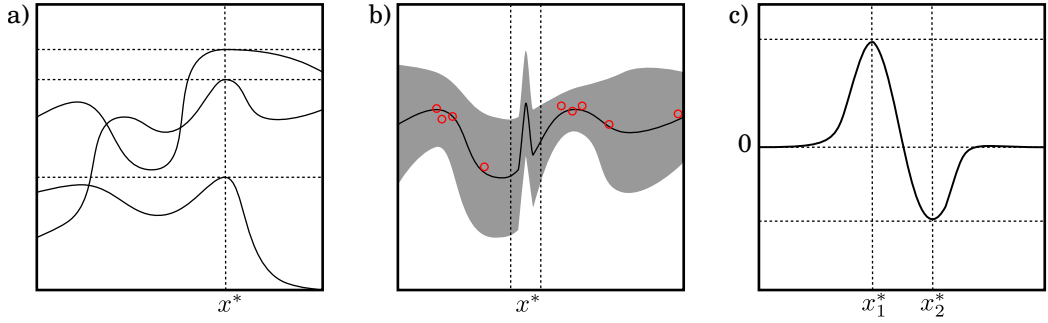}
    \caption{Illustration of assumptions. a) Three functions from $\set{G}(x^\ast)$. They all have their maximum located at $x^\ast \in \set{X}$. b) Schematic representation of the likelihood function of $x^\ast \in \set{X}$ conditioned on a few observations. The curve corresponds to the mean and the shaded area to the confidence bounds. The density inside of the neighborhood is unique to the hypothesis $x^\ast$, while the density outside is shared amongst all the hypotheses. c) The log-likelihood ratio of the hypotheses $x^\ast_1$ and $x^\ast_2$ as a function of the test point $x$. The kernel used in the plot is Gaussian.}\label{fig:assumptions}
\end{center}
\end{figure}

Furthermore, we impose a symmetry condition on the likelihood function. Let $x^\ast_1$ and $x^\ast_2$ be in $\set{X}$, and consider their associated equivalence classes $\set{G}(x^\ast_1)$ and $\set{G}(x^\ast_2)$. There is no reason for them to be very different: in fact, they should virtually be indistinguishable outside of the neighborhoods of~$x^\ast_1$ and~$x^\ast_2$. It is only inside of the neighborhood of $x^\ast_1$ when $\set{G}(x^\ast_1)$ becomes distinguishable from the other equivalence classes because the functions in $\set{G}(x^\ast_1)$ systematically predict higher values than the rest. This assumption is illustrated in Figure~\ref{fig:assumptions}b. In fact, taking the log-likelihood ratio of two competing hypotheses
\[
  \log \frac{ P(y_t|x^\ast_1, x_t, \set{D}_{t-1}) }{ P(y_t|x^\ast_2, x_t, \set{D}_{t-1}) }
\]
for a given test location $x_t$ should give a value equal to zero unless $x_t$ is inside of the vicinity of $x^\ast_1$ or $x^\ast_2$ (see Figure~\ref{fig:assumptions}c). In other words, the amount of evidence a hypothesis gets when the test point is outside of its neighborhood is essentially zero (i.e. it is the same as the amount of evidence that most of the other hypotheses get).

\subsection{Likelihood and Conjugate Prior}

Following our previous discussion, we propose the following likelihood model. Given the previous data $\set{D}_{t-1}$ and a test point $x_t \in \set{X}$, the likelihood of the observation $y_t$ is
\begin{equation}\label{eq:likelihood}
  P(y_t|x^\ast, x_t, \set{D}_{t-1})
  = \frac{1}{Z(x_t, \set{D}_{t-1})}
        \lambda(y_t|x_t, \set{D}_{t-1})
        \exp\bigl\{ \alpha_t \cdot h_t(x^\ast) - \alpha_{t-1} \cdot h_{t-1}(x^\ast) \bigr\},
\end{equation}
where: $Z(x_t, \set{D}_{t-1})$ is a normalizing constant; $\lambda(y_t|x_t, \set{D}_{t-1})$ is a posterior probability over $y_t$ given $x_t$ and the data $\set{D}_{t-1}$; $\alpha_t$ is a precision measuring the knowledge we have about the whole function given by
\[
  \alpha_0 := \rho \cdot \xi 
  \qquad \text{and} \qquad
  \alpha_t := \rho \cdot \Bigl( \xi + \frac{ \sum_i K(x_i, x_i) }{ \sum_i \sum_j K(x_i, x_j) } \Bigr)
\]
where $\rho>0$ is a precision scaling parameter; $\xi > 0$ is a parameter representing the number prior locations tested; and $h_t$ is an estimate of the mean function $\bar{f}$ given by
\[
  h_0(x^\ast) := y_0(x^\ast)
  \qquad \text{and} \qquad
  h_t(x^\ast) := \frac{ \sum_{i=1}^t K(x_i, x^\ast) y_i + K_0(x^\ast) y_0(x^\ast) }
    { \sum_{i=1}^t K(x_i, x^\ast) + K_0(x^\ast)}.
\]
In the last expression, $y_0$ corresponds to a prior estimate of $\bar{f}$ with prior precision $K_0$. Inspecting~\eqref{eq:likelihood}, we see that the likelihood model favours positive changes to the estimated mean function \emph{from} new, unseen test locations. The pdf $\lambda(y_t|x_t, \set{D}_{t-1})$ does not need to be explicitly defined, as it will later drop out when computing the posterior. The only formal requirement is that it should be independent of the hypothesis $x^\ast$. 

We propose the conjugate prior
\begin{equation}\label{eq:prior}
  P(x^\ast) 
  = \frac{1}{Z_0} \exp\{ \alpha_0 \cdot g_0(x^\ast) \}
  = \frac{1}{Z_0} \exp\{ \xi \cdot y_0(x^\ast) \}.  
\end{equation}
The conjugate prior just encodes a prior estimate of the mean function. In a practical optimization application, it serves the purpose of guiding the exploration of the domain, as locations $x^\ast$ with high prior value $y_0(x^\ast)$ are more likely to contain the maximizing argument. 

Given a set of data points $\set{D}_t$, the prior~\eqref{eq:prior} and the likelihood~\eqref{eq:likelihood} lead to a posterior given by
\begin{align}
  \nonumber
  P(x^\ast|\set{D}_t)
  &= \frac{ P(x^\ast) \prod_{k=1}^t P(y_k|x^\ast, x_k, \set{D}_{k-1}) }
         { \int_\set{X} P(x') \prod_{k=1}^t P(y_k|x', x_k, \set{D}_{k-1}) \, dx' }
  \\ \nonumber &= 
    \frac{
      \exp\bigl\{ \sum_{k=1}^t \alpha_k \cdot h_k(x^\ast) - \alpha_{k-1} \cdot h_{k-1}(x^\ast) \bigr\}
      Z_0^{-1} \prod_{k=1}^t Z(x_k, \set{D}_{k-1})^{-1}
    }{
      \int_\set{X}
      \exp\bigl\{ \sum_{k=1}^t \alpha_k \cdot h_k(x') - \alpha_{k-1} \cdot h_{k-1}(x') \bigr\}
      Z_0^{-1} \prod_{k=1}^t Z(x_k, \set{D}_{k-1})^{-1}
      \, dx'
    }
   \\ \label{eq:posterior} &=
    \frac{ 
      \exp\bigl\{ \alpha_t \cdot h_t(x^\ast) \bigr\} 
    }{
      \int_\set{X} \exp\bigl\{ \alpha_t \cdot h_t(x') \bigr\} \, dx'
    }.
\end{align}
Thus, the particular choice of the likelihood function guarantees an analytically compact posterior expression. In general, the normalizing constant in~\eqref{eq:posterior} is intractable, which is why the expression is only practical for relative comparisons of test locations. Substituting the precision~$\alpha_t$ and the mean function estimate~$h_t$ yields
\[
  P(x^\ast|\set{D}_t)
    \propto
    \exp\biggl\{
    \rho
    \cdot
	\biggl( \xi +
        t \cdot \frac{ \sum_i K(x_i, x_i) }{ \sum_i \sum_j K(x_i, x_j) }
        \biggr)
    \cdot
        \frac{ \sum_i K(x_i, x^\ast) y_i + K_0(x^\ast) y_0(x^\ast)}
             { \sum_i K(x_i, x^\ast) + K_0(x^\ast) }
    \biggr\}.
\]

\section{Experimental Results}\label{sec:results}

\subsection{Parameters.}
We have investigated the influence of the parameters on the resulting posterior probability distribution. Figure~\ref{fig:parameters} shows how the choice of the precision~$\rho$ and the kernel width~$\sigma$ affect the shape of the posterior probability density. We have used the Gaussian kernel
\begin{equation}\label{eq:kernel}
    K(x, x^\ast) = \exp\Bigl\{-\frac{1}{2\sigma^2} (x-x^\ast)^2\Bigr\}.
\end{equation}
In this figure, 7~data points are shown, which were drawn as $y \drawnfrom N(f(x), 0.3)$, where the mean function is
\begin{equation}\label{eq:objective}
f(x) = \cos(2x+\tfrac{3}{2}\pi) + \sin(6x+\tfrac{3}{2}\pi).
\end{equation}
The functions $K_0$ and $y_0$ were chosen as
\begin{equation}\label{eq:prior-suff-stat}
    K_0(x) = 1 \qquad\text{and}\qquad y_0(x) = -\frac{1}{2\sigma^2_0}(x-\mu_0)^2,
\end{equation}
where the latter corresponds to the logarithm of a Gaussian with mean~$\mu_0=1.5$ and variance~\mbox{$\sigma^2_0 = 5$}. Choosing a higher value for~$\rho$ leads to sharper updates, while higher values for the kernel width~$\sigma$ produce smoother posterior densities.

\begin{figure}[tbp]
\begin{center}
    \footnotesize
    \psfrag{la}[c]{a)}
    \psfrag{lb}[c]{b)}
    \psfrag{lc}[c]{c)}
    \psfrag{x1}[c]{$\set{X}$}
    \psfrag{x2}[c]{$\set{X}$}
    \includegraphics[width=14cm]{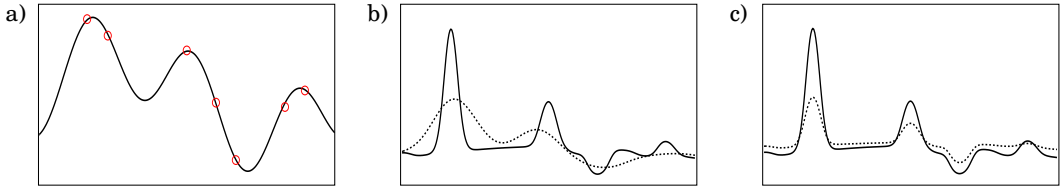}
    \caption{Effect of the change of parameters on the posterior density over the location of the maximizing test point. 
Panel~(a) shows the 7 data points drawn from the noisy function (solid curve). Panel~(b) shows the effect of diminishing the precision on the posterior, where solid and
shaded curves correspond to $\rho=0.2$ and $\rho=0.1$ respectively. Panel~(c) shows the effect of increasing the width of the kernel (here, Gaussian).
The solid and dotted curves correspond to $\sigma = 0.01$ and $\sigma=0.1$ respectively.}\label{fig:parameters}
\end{center}
\end{figure}

\subsection{Application to Optimization.}

\paragraph{Comparison to Gaussian Process UCB.}

We have used the model to optimize the same function~\eqref{eq:objective} as in our preliminary tests but with higher additive noise equal to one. This is done by sampling the next test point $x_t$ directly from the posterior density over the optimum location $P(x^\ast|\set{D}_t)$, and then using the resulting pair $(x_t,y_t)$ to recursively update the model. Essentially, this procedure corresponds to Bayesian control rule/Thompson~sampling \cite{May2011, OrtegaBraun2010}.

We compared our method against a Gaussian Process optimization method using an upper confidence bound (UCB) criterion \cite{Srinivas2010}. The parameters for the GP-UCB were set to the following values: observation noise $\sigma_n=0.3$ and length scale $\ell=0.3$. For the constant that trades off exploration and exploitation we followed Theorem $1$ in \cite{Srinivas2010} which states  $\beta_t=2\log(|D|t^2\pi^2/6\delta)$ with $\delta=0.5$. We have implemented our proposed method with a Gaussian kernel as in~\eqref{eq:kernel} with width~$\sigma^2=0.05$. The prior sufficient statistics are exactly as in~\eqref{eq:prior-suff-stat}. The precision parameter was set to $\rho = 0.3$. 

Simulation results over ten independent runs are summarized in Figure~\ref{fig:bandit}. We show the time-averaged observation values $y$ of the noisy function evaluated at test locations sampled from the posterior. Qualitatively, both methods show very similar convergence (on average), however our method converges faster and with a slightly higher variance. 

\begin{figure}[tbp]
\begin{center}
    \footnotesize
    \psfrag{la}[c]{a)}
    \psfrag{lb}[c]{b)}
    \psfrag{lc}[c]{c)}
    \psfrag{l1}[c]{$\rho=0.1$}
    \psfrag{l2}[c]{$\rho=0.5$}
    \includegraphics[width=10cm]{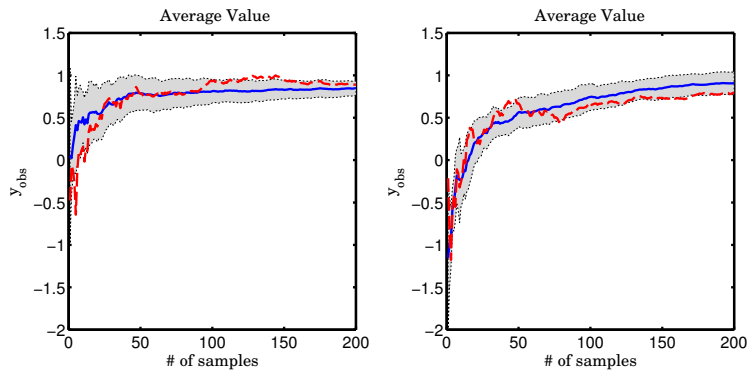}
    \caption{Observation values obtained by sampling from the posterior over the maximizing argument (left panel) and according to GP-UCB (right panel). The solid blue curve corresponds to the time-averaged function value, averaged over ten runs. The gray area corresponds to the error bounds (1~standard deviation), and the dashed curve in red shows the time-average of a single run.}\label{fig:bandit}
\end{center}
\end{figure}

\paragraph{High-Dimensional Problem.}

To test our proposed method on a challenging problem, we have designed a non-convex, high-dimensional noisy function with multiple local optima. This \emph{Noisy Ripples} function is defined as 
\[
  f(x) = -\tfrac{1}{1000} \|x-\mu\|^2 + \cos(\tfrac{2}{3}\pi \|x-\mu\|)
\]
where $\mu \in \set{X}$ is the location of the global maximum, and where observations have additive Gaussian noise with zero mean and variance $0.1$. The advantage of this function is that it generalizes well to any number of dimensions of the domain. Figure~\ref{fig:ripples}a illustrates the function for the 2-dimensional input domain. This function is difficult to optimize because it requires averaging the noisy observations and smoothing the ridged landscape in order to detect the underlying quadratic form. 

We optimized the 50-dimensional version of this function using a Metropolis-Hastings scheme to sample the next test locations from the posterior over the maximizing argument. The Markov chain was started at $[20,20,\cdots,20]^T$, executing 120~isotropic Gaussian steps of variance $0.07$ before the point was used as an actual test location. For the arg-max prior, we used a Gaussian kernel with lengthscale $l=2$, precision factor $\rho=1.5$, prior precision $K_0(x^\ast) = 1$ and prior mean estimate~$y_0(x^\ast) = -\tfrac{2}{1000} \|x + 5\|^2$. The goal $\mu$ was located at the origin.

The result of one run is presented in Figure~\ref{fig:ripples}b. It can be seen that the optimizer manages to quickly ($\approx 100$ samples) reach near-optimal performance, overcoming the difficulties associated with the high-dimensionality of the input space and the numerous local optima. Crucial for this success was the choice of a kernel that is wide enough to accurately estimate the mean function. The authors are not aware of any method capable of solving a problem is similar characteristics.

\begin{figure}[tbp]
\begin{center}
    \footnotesize
    \psfrag{la}[c]{a)}
    \psfrag{lb}[c]{b)}
    \psfrag{lc}[c]{c)}
    \psfrag{x1}[c]{$\set{X}$}
    \psfrag{x2}[c]{$\set{X}$}
    \includegraphics[width=14cm]{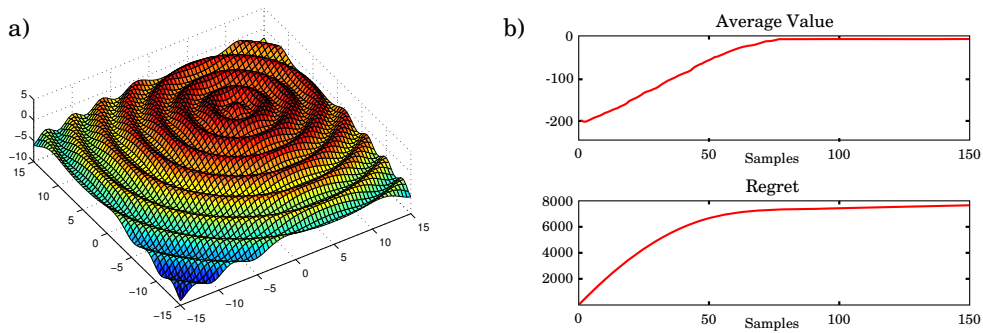}
    \caption{a) The \emph{Noisy Ripples} objective function in 2~dimensions. b) The time-averaged value  and the regret obtained by the optimization algorithm on a 50-dimensional version of the \emph{Noisy Ripples} function.}\label{fig:ripples}
\end{center}
\end{figure}

\section{Discussion \& Conclusions}\label{sec:discussion}

We have proposed a novel Bayesian approach to model the location of the maximizing test point of a noisy, nonlinear function. This has been achieved by directly constructing a probabilistic model over the input space, thereby bypassing having to model the underlying function space---a much harder problem. In particular, we derived a likelihood function that belongs to the exponential family by assuming a form of symmetry in function space. This in turn, enabled us to state a conjugate prior distribution over the optimal test point.

Our proposed model is computationally very efficient when compared to Gaussian process-based (cubic) or UCB-based models (expensive computation of $\arg\max$). The evaluation time of the posterior density scales quadratically in the size of the data. This is due to the calculation of the effective number of previously seen test locations---the kernel regressor requires linear compuation time. However, during MCMC steps, the effective number of test locations does not need to be updated as long as no new observations arrive. 

In practice, one of the main difficulties associated with our proposed method is the choice of the parameters. As in any kernel-based estimation method, choosing the appropriate kernel bandwidth can significantly change the estimate and affect the performance of optimizers that rely on the model. There is no clear rule on how to choose a good bandwidth.   

\comment{
Hamiltonian MCMC
sampling is very important

can solve multidimensional

very faster

scales very well with number of dimensions

no global search

sensitive in choice of parameters

not obvious how to choose parameters

small mh steps

kernel allows to smoother (takes averages) gain information about the whole region
}

In a future research, it will be interesting to investigate the theoretical properties of the proposed nonparametric model, such as the convergence speed of the estimator and its relation to the extensive literature on active learning and bandits.

\newpage
\renewcommand{\bibsection}{\subsubsection*{References}}

\vskip 0.2in
\bibliographystyle{unsrtnat}
\bibliography{bibliography}

\end{document}